\newcommand{\method}{S-EPOA\xspace}
\definecolor{lightred}{RGB}{250, 220, 220}
\definecolor{lightblue}{RGB}{200, 240, 250}
\title{\method: Overcoming the Indistinguishability of Segments with \\
Skill-Driven Preference-Based Reinforcement Learning}
\author{
Ni Mu$^1$\thanks{These authors contributed equally.}\and
Yao Luan$^1$\footnotemark[1]\and
Yiqin Yang$^{2}$\thanks{Corresponding Authors.}\and
Bo Xu$^2$\And
Qing-Shan Jia$^1$\footnotemark[2]
\\
\affiliations
$^1$Department of Automation, Tsinghua University\\
$^2$Institute of Automation, Chinese Academy of Sciences\\
\emails
\{mn23, luany23\}@mails.tsinghua.edu.cn,
yiqin.yang@ia.ac.cn, 
jiaqs@tsinghua.edu.cn
}
\begin{document}

\maketitle

\begin{abstract}
Preference-based reinforcement learning (PbRL) stands out by utilizing human preferences as a direct reward signal, eliminating the need for intricate reward engineering.
However, despite its potential, traditional PbRL methods are often constrained by the indistinguishability of segments, which impedes the learning process.
In this paper, we introduce Skill-Enhanced Preference Optimization Algorithm~(\method), which addresses the segment indistinguishability issue by integrating skill mechanisms into the preference learning framework.
Specifically, we first conduct the unsupervised pretraining to learn useful skills.
Then, we propose a novel query selection mechanism to balance the information gain and distinguishability over the learned skill space.
Experimental results on a range of tasks, including robotic manipulation and locomotion, demonstrate that \method significantly outperforms conventional PbRL methods in terms of both robustness and learning efficiency. 
The results highlight the effectiveness of skill-driven learning in overcoming the challenges posed by segment indistinguishability. 
\end{abstract}

\begin{figure*}[t]
    \centering
    \includegraphics[width=0.85\linewidth]{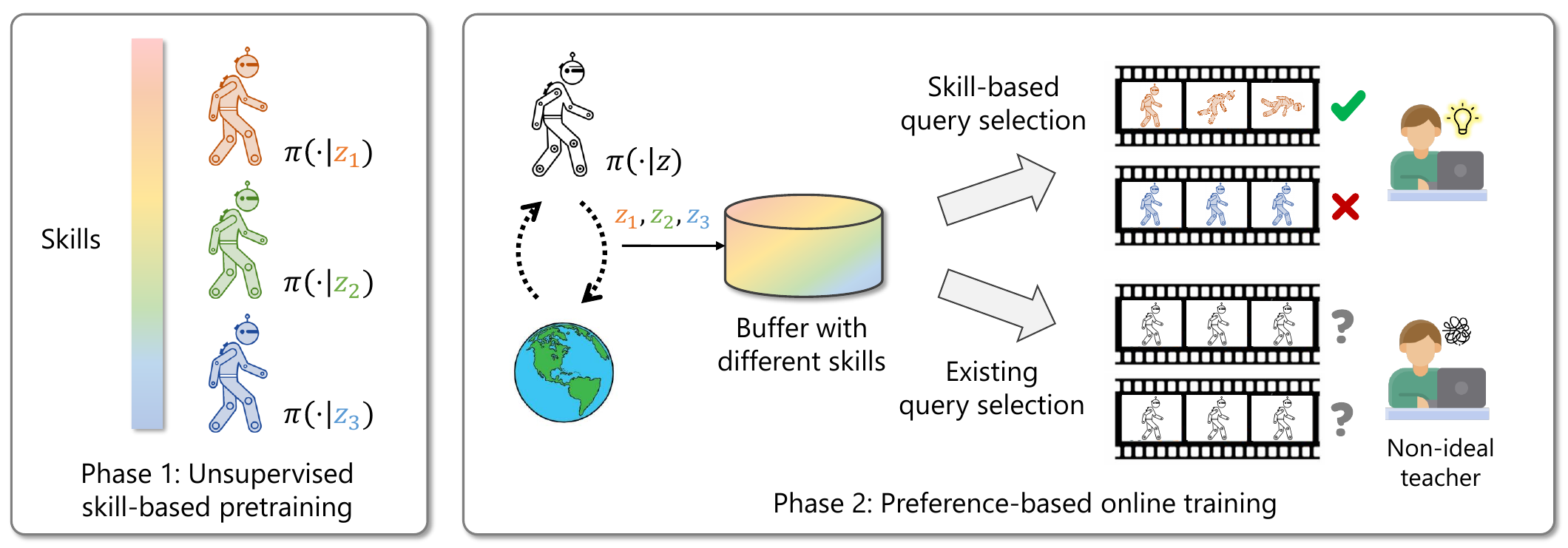}
    \caption{
    The framework of \method. 
    In the pretraining phase, we learn diverse skills based on unsupervised skill discovery methods. 
    In the online training phase, we leverage a novel skill-based query selection method to generate distinguishable queries for non-ideal teachers.
    }
    \label{fig:sepoa_framework}
\end{figure*}

\section{Introduction}
Reinforcement Learning (RL) has made significant progress across a variety of fields, including gameplay~\cite{mnih2013playing,silver2016mastering}, robotics~\cite{chen2022towards} and autonomous systems~\cite{bellemare2020autonomous,mu2024large,luan2025efficient}.
Yet, the success of RL often relies on the careful construction of reward functions, a process that can be both labor-intensive and costly.
To solve this issue, Preference-based Reinforcement Learning (PbRL) emerges as a compelling alternative~\cite{christiano2017deep,lee2021pebble}.
PbRL uses human-provided preferences among various agent behaviors to serve as the reward signal, thereby eliminating the need for hand-crafted reward functions.

Existing PbRL methods~\cite{lee2021pebble,park2022surf,shin2023benchmarks,kim2023preference} focus on enhancing feedback efficiency, by maximizing the expected return with minimal feedback queries.
However, these methods rely on high-quality or even ideal expert feedback, overlooking an important issue in preference labeling: \textbf{indistinguishability of segments}.
For example, asking humans to specify preferences between two similar trajectories can be challenging, as it is difficult for human observers to discern which is superior. 
Consequently, the resulting preference labels are often incorrect, further degrading the performance of the algorithm~\cite{lee2021b}.
Therefore, the segment indistinguishability issue limits the broader applicability of PbRL.

In the field of unsupervised reinforcement learning, information-theoretic methods have been shown to discover useful and diverse skills, without the need for rewards~\cite{eysenbach2019diversity,hansen2019fast}.
In contrast to the unlabeled agent behaviors in PbRL, these discovered skills possess a higher degree of distinguishability, allowing humans to easily express preferences between different skills.
However, the skill-driven approach has not been fully explored in PbRL, and how to apply the discovered skills to preference learning remains unclear.
This naturally leads to the following question: 
\begin{center}
{{\it How can we integrate the skill mechanism with PbRL to overcome the indistinguishability of segments?}}
\end{center}

In this work, we aim to provide an effective solution to the important and practical issue in PbRL: indistinguishability of segments, by incorporating the skill mechanism.
First, we conduct skill-based unsupervised pretraining to learn useful and diverse skills.
Next, we introduce a novel query selection mechanism in the learned skill space, which effectively balances the information gain with the distinguishability of the query.
We name our method as \textbf{S}kill-\textbf{E}nhanced \textbf{P}reference \textbf{O}ptimization \textbf{A}lgorithm (\method).
Our experiments demonstrate the necessity of the above two techniques, showing that \method significantly outperforms conventional PbRL methods in terms of both robustness and learning efficiency.

In summary, our contributions are threefold: 
\begin{itemize}
\item First, we highlight the critical issue of segment indistinguishability, validate its practical significance through human experiments, and theoretically analyze the limitations of current mainstream query selection methods, such as disagreement.  
\item Second, we propose \method, a skill-driven reward learning framework, which selects queries in a highly distinguishable skill space to address the segment indistinguishability issue.
\item Lastly, we conduct extensive experiments to show that \method significantly outperforms conventional PbRL methods under non-ideal feedback conditions. 
The results indicate that by introducing the skill mechanism, we can effectively mitigate the segment indistinguishability issue, 
thereby broadening the application of PbRL.
\end{itemize}

\section{Preliminaries}
\label{sec:preliminaries}

\paragraph{Reinforcement Learning.}
A Markov Decision Problem (MDP) could be characterized by the tuple $(\mathcal{S},\mathcal{A},P,r,\gamma)$, where $\mathcal{S}$ is the state space, $\mathcal{A}$ is the action space, $P:\mathcal{S}\times\mathcal{A}\rightarrow \Delta(\mathcal{S})$ is the transition function, 
$r:\mathcal{S}\times\mathcal{A}\rightarrow \mathbb{R}$ is the reward function, and $\gamma\in[0,1)$ is the discount factor balancing instant and future rewards.
A policy $\pi$ interacts with the environment by sampling action $a$ from distribution $\pi(s,a)$ when observing state $s$.
The goal of RL agent is to learn a policy $\pi: \mathcal{S}\rightarrow\Delta{(\mathcal{A})}$, which maximizes the expectation of a discounted cumulative reward: $\mathcal{L}(\pi)=\mathbb{E}_{\mu_0,\pi}\left[\sum_{t=0}^{\infty}\gamma^t r(s_t, a_t)\right]$.

For any policy $\pi$, the corresponding state-action value function is $Q^{\pi}(s,a)=\mathbb{E}[\sum_{k=0}^{\infty}\gamma^k r_{t+k}|S_t=s, A_t=a, \pi]$.
The state value function is $V^{\pi}(s)=\mathbb{E}[\sum_{k=0}^{\infty}\gamma^k r_{t+k}|S_t=s, \pi]$.
It follows from the Bellman equation that $V^{\pi}(s)=\sum_{a\in\mathcal{A}}\pi(a|s)Q^{\pi}(s,a)$.

\paragraph{Preference-based Reinforcement Learning.}
In PbRL, the reward function $r$ is replaced by human-provided preferences over segment pairs, denoted as $\sigma_0, \sigma_1$.
A segment $\sigma$ is a continuous sequence in a fixed length $H$ of states and actions, i.e. $\{s_k,a_k, \dots,s_{k+H-1},a_{k+H-1}\}$.
Preferences are expressed as one-hot labels $y\in\{(0,1),(1,0)\}$, indicating the preferred segment.
All preference triples $(\sigma_0, \sigma_1, y)$ are stored in the dataset $D$.
The algorithm first estimates a reward function $\hat r_\psi:\mathcal{S}\times\mathcal{A}\rightarrow \mathbb{R}$, parameterized by $\psi$, using provided preferences,
This learned reward function $\hat{r}_\psi$ is then used to train the policy with standard RL algorithms.
To construct $\hat{r}_\psi$, we employ the Bradley-Terry model \cite{bradley-terry,christiano2017deep} as follows:
\begin{equation}
\label{eq:BT_model}
    P_\psi[\sigma_1\succ\sigma_0] =
    \frac{\exp\sum_t\hat{r}_\psi({s}_t^1,{a}_t^1)}{\sum_{i\in\{0,1\}}\exp\sum_t\hat{r}_\psi({s}_t^i,{a}_t^i)}.
\end{equation}
where $\sigma_1\succ \sigma_0$ indicates the human prefer $\sigma_1$ than $\sigma_0$.
$\hat r_\psi$ can be trained by minimizing the cross-entropy loss:
\begin{equation}
\label{eq:CE_loss}
    \begin{aligned}
        \mathcal{L}_{\mathrm{reward}}(\psi) =
        -&\underset{(\sigma_0,\sigma_1,y) \sim \mathcal{D}}{\mathbb{E}}
        \Big[ y(0) \log P_\psi[\sigma_0\succ\sigma_1]          \\
              & + y(1) \log P_\psi[\sigma_1\succ\sigma_0] \Big].
    \end{aligned}
\end{equation}

\section{Indistinguishability of Segments}

\label{sec:gap}

In an ideal PbRL scenario, human labelers can give the true preference between two distinct behaviors. 
In practice, however, humans often have to label similar trajectories, which can lead to mistakes. These errors could reduce the precision of the trained reward function, and consequently degrade the performance.
We name this labeling issue as the \textbf{Indistinguishability of Segments}. 
The issue significantly limits the broad application of PbRL, particularly in safety control fields~\cite{fulton2018safe} where the precision of the reward function is strictly required.

\paragraph{Human experiments. }
To validate this issue, we conduct human experiments, where human labelers provide preferences between segments with various return differences.
Then, we calculate the match ratio between human labels and ground truth.
Specifically, the human labelers watch a video rendering each segment and select the one that better achieves the objective based on the instruction. 
For example, the instruction given to human teachers in the $\verb|Cheetah_run|$ task is to run as fast as possible.
The results in Figure \ref{fig:human_get_confused} shows that as the return differences decrease, the match rate between human-labeled and ground truth preferences diminishes, indicating an increase in labeling errors. 
This confirms the practical significance of the indistinguishability issue.
Please refer to Appendix~\ref{app:human_exp} for experimental details.


\begin{figure}
    \centering
    \includegraphics[width=0.9\linewidth]{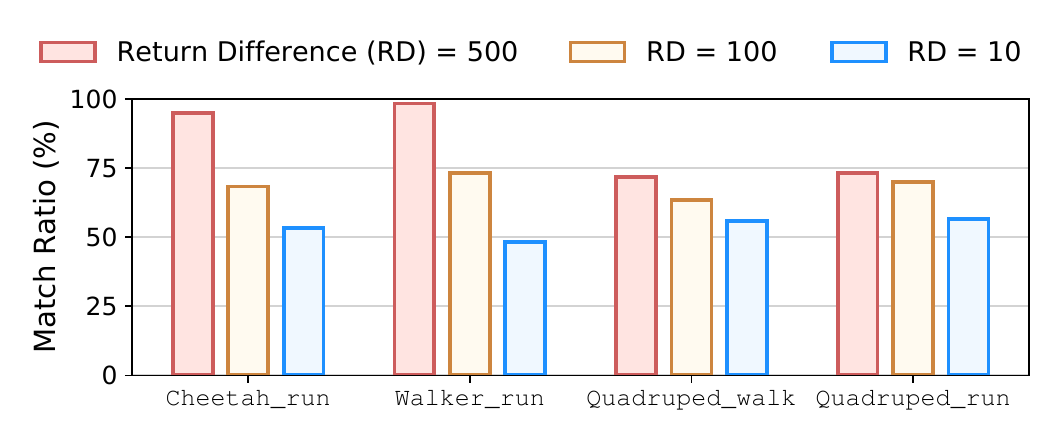}
    \caption{Human-labeled preferences match ratio with ground truth. As the return differences decrease, labeling errors increase.}
    \label{fig:human_get_confused}
\end{figure}

Furthermore, the segment indistinguishability issue can be particularly severe when the query selection method prioritizes ``informative'' queries. 
A representative example is the disagreement mechanism in PEBBLE \cite{lee2021pebble}, which selects queries based on the highest variance in predictions from an ensemble of reward models. 
To illustrate why this occurs, we conduct the following theoretical analysis.
As shown in Proposition \ref{thm:disagreement-bad}, the disagreement mechanism tends to select queries with similar return values, resulting in segments with similar and often indistinguishable behaviors.

\begin{restatable}{proposition}{disagreementbad}
\label{thm:disagreement-bad}
Let $\{\hat{r}^i\}$ be an ensemble of i.i.d. reward estimators, and $(\sigma_1,\sigma_2)$ be a segment pair with ground-truth cumulative
discounted reward $r_1\ge r_2$.
Suppose $\hat r^i$ estimates the cumulative discounted reward of $\sigma_j$ as $\hat r^i_j\sim N(r_j, c)$ ($c$ is a constant), and induces preference
\begin{equation}
    \label{eq:sigmoid-perf}
    \hat P_i[\sigma_1\succ\sigma_2] =
    \frac{\exp\hat{r}^i_1}{\exp\hat{r}^i_1+\exp\hat{r}^i_2}=\mathrm{sigmoid}(\hat r^i_1-\hat r^i_2).
\end{equation}
Then the disagreement of induced preference across $\{\hat{r}^i\}$, i.e. $\mathrm{Var}[\hat P[\sigma_1\succ\sigma_2]]$, approximately and monotonically increases as the dissimilarity of segment pair $\Delta =r_1-r_2$ decreases.
\end{restatable}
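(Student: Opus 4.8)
The plan is to reduce the disagreement quantity to the variance of a sigmoid transform of a single Gaussian and then analyze that variance through a first-order delta-method expansion. First I would note that, since the estimators are i.i.d.\ and each $\hat r^i_j$ is Gaussian with a fixed variance, the difference $D^i := \hat r^i_1 - \hat r^i_2$ is Gaussian with mean $\Delta = r_1 - r_2$ and a variance $v$ that does not depend on $\Delta$ (e.g.\ $v = 2c$ if the two segment estimates are independent; more generally $v$ is whatever constant the covariance structure dictates). Hence the induced preference $\hat P_i[\sigma_1\succ\sigma_2] = \mathrm{sigmoid}(D^i)$ is a fixed monotone function of $D^i$, and the ensemble disagreement is $\mathrm{Var}[\hat P[\sigma_1\succ\sigma_2]] = \mathrm{Var}\big[\mathrm{sigmoid}(D)\big]$ with $D\sim N(\Delta, v)$. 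Writing $D = \Delta + \sqrt{v}\,Z$ with $Z\sim N(0,1)$, this is a function $G(\Delta) := \mathrm{Var}[\mathrm{sigmoid}(\Delta + \sqrt{v}\,Z)]$ of $\Delta$ alone.

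Next I would apply the delta method: Taylor-expanding $\mathrm{sigmoid}$ about its mean $\Delta$ yields $G(\Delta) \approx [\mathrm{sigmoid}'(\Delta)]^2\,\mathrm{Var}[D] = v\,[\sigma(\Delta)(1-\sigma(\Delta))]^2$, where $\sigma=\mathrm{sigmoid}$ and $\mathrm{sigmoid}'(x)=\sigma(x)(1-\sigma(x))$. It then remains to study the elementary function $g(\Delta) := [\sigma(\Delta)(1-\sigma(\Delta))]^2$. A short computation gives $\sigma(\Delta)(1-\sigma(\Delta)) = \tfrac14\,\mathrm{sech}^2(\Delta/2)$, which is strictly decreasing in $|\Delta|$ and maximized at $\Delta = 0$; squaring preserves both properties. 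Since the hypothesis $r_1\ge r_2$ forces $\Delta\ge 0$, the map $\Delta\mapsto g(\Delta)$ is strictly decreasing on the relevant domain, so $G(\Delta)$ increases as $\Delta = r_1-r_2$ decreases toward $0$, which is the claim. One can also observe that $G$ is an even function of $\Delta$, by combining the symmetry $\sigma(-x)=1-\sigma(x)$ with the symmetry of $Z$, which is consistent with the maximum sitting at $\Delta=0$.

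The main obstacle — and the reason the statement only asserts an \emph{approximate} monotonicity — is controlling the delta-method remainder: the first-order expansion is exact only in the small-variance limit $v\to 0$, and for moderate $v$ the true $G(\Delta)$ differs from $v\,g(\Delta)$ by higher-order terms involving $\mathrm{sigmoid}''$ and beyond. There are three ways to handle this: (i) simply present the expansion as an approximation, as the statement does; (ii) bound the remainder using the boundedness of the derivatives of $\mathrm{sigmoid}$, showing the approximation error is $O(v^2)$ uniformly in $\Delta$, so the monotonicity holds up to that order; or (iii) attempt an exact argument that $G(\Delta)$ is decreasing on $\Delta\ge 0$ — plausible given the evenness and the unimodal shape of $\sigma(1-\sigma)$, but requiring differentiation under the integral sign and a more delicate sign analysis that is not needed for the paper's purposes. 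I would go with option (i), optionally flagging (ii) as the route to a fully rigorous statement.
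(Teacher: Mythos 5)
Your argument is correct at the level of rigor the proposition claims (``approximately and monotonically''), but it takes a different route from the paper. You use a first-order delta-method expansion, reducing the disagreement to $\mathrm{Var}[\mathrm{sigmoid}(D)]\approx v\,[\sigma(\Delta)(1-\sigma(\Delta))]^2=\tfrac{v}{16}\,\mathrm{sech}^4(\Delta/2)$, which is even in $\Delta$ and strictly decreasing on $\Delta\ge 0$. The paper instead follows the logit-normal moment approximation of Huber et al.: it approximates the sigmoid by a probit with scaling $\lambda=\sqrt{\pi/8}$ and obtains the closed form $\mathrm{Var}[\hat P]\approx\mu_s(1-\mu_s)\bigl(1-\tfrac1t\bigr)$ with $\mu_s\approx\mathrm{sigmoid}(\Delta/t)\in[\tfrac12,1)$ and $t=\sqrt{1+2\lambda^2c^2}$, after which monotonicity follows because $x(1-x)$ is decreasing for $x\ge\tfrac12$. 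The two derivations rest on the same structural fact --- the variance proxy is a symmetric unimodal function of $\Delta$ peaking at $\Delta=0$ --- but they trade off differently: your delta method is more elementary and transparent, yet it is a small-variance expansion whose remainder (as you note) is uncontrolled for moderate noise $c$, and the noise level enters only as an overall factor $v$; the paper's probit approximation keeps the dependence on $c$ inside the argument via $t$, is known to be accurate globally (not just as $v\to 0$), and so supports the ``approximate'' claim more uniformly in the noise level. One small point of care: the paper takes $\hat r^i_1,\hat r^i_2$ independent with $\delta\sim N(\Delta,\sqrt{2c^2})$ (i.e.\ $c$ is a standard deviation, so $v=2c^2$ rather than $2c$); your hedge that $v$ is whatever constant the covariance structure dictates covers this, and the monotonicity conclusion is unaffected since $v$ does not depend on $\Delta$.
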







\begin{algorithm}[t]
\caption{\method Framework}
\label{alg:sepoa_full_version}
\begin{algorithmic}[1]
\REQUIRE frequency of feedback $K$, number of queries $M$ per feedback session, total feedback number $N_{\rm total}$
\STATE Initialize $\pi(a|s,z) \gets $ \textsc{Unsupervised Pretrain}
\FOR {each iteration}
    \IF {iteration \% $K$ == 0 and total feedback $ < N_{\rm total}$}
        \STATE Update trajectory estimator $R_{\theta}(z)$ based on Eq.~\ref{eq: estimate}
        \FOR {$m$ in $1\cdots M$}
                        \STATE $(\sigma_0, \sigma_1) \sim$ \textsc{Query Selection}
            \STATE Query the teacher for preference label $y$
            \STATE Store preference $\mathcal{D}\gets  
            \mathcal{D}\cup \{(\sigma_0, \sigma_1, y)\}$
        \ENDFOR
        \STATE Update the reward model $\hat r_{\psi}$ using $\mathcal{D}$
        \STATE Relabel replay buffer $\mathcal{B}$ using $\hat r_{\psi}$
    \ENDIF
    \IF {the current episode ends}
        \STATE Update $z_\text{task}$ based on Eq.~\ref{eq:z_task_maxR}
    \ENDIF
    \STATE Obtain action $a_t\sim \pi(a|s_t, z_\text{task})$ and next state $s_{t+1}$ 
                \STATE Store transitions $\mathcal{B}\gets  \mathcal{B}\cup \{(s_t, a_t, s_{t+1},\hat r_{\psi}(s_t,a_t)\}$
    \STATE Sample transitions $(s,a,s',\hat r_{\psi})$ from $\mathcal{B}$
    \STATE Minimize $\mathcal{L}_\text{critic}$ and $\mathcal{L}_\text{actor}$ with $\hat{r}_{\psi}$
\ENDFOR
\end{algorithmic}
\end{algorithm}

\section{Skill-Driven PbRL}

To address the issue of segment indistinguishability, we hope to choose segments with different behaviors. 
Skill discovery methods can discover diverse skills, which align with this requirement. 
In this section, we propose the Skill-Enhanced Preference Optimization Algorithm~(\method), which leverages skill discovery techniques to enhance PbRL’s reward learning by selecting distinguishable queries. 
\method can be integrated with any skill discovery method, by introducing the following two key components:
\begin{itemize}
    \item Skill-based unsupervised pretraining, where the agent explores the environment and learns useful skills without supervision~(see Section~\ref{subsec:aps_pretrain}).
    \item Skill-based query selection, which can select more distinguishable queries based on the learned skill space~(see Section~\ref{subsec:sd_query_selection}).
\end{itemize}
We show the overall framework of \method in Figure~\ref{fig:sepoa_framework} and Algorithm~\ref{alg:sepoa_full_version}.

\subsection{Skill-based Unsupervised Pretraining}
\label{subsec:aps_pretrain}

Previous unsupervised pretraining methods \cite{lee2021pebble,park2022surf} aim to help the agent explore the state space by maximizing the state entropy $H(s)$. 
However, these methods often fail to produce clearly distinguishable behaviors, as the learned policies primarily focus on exploration, rather than diversity. 
As a result, the queries may be difficult for human teachers to distinguish.
Skill-based unsupervised pretraining addresses this issue by guiding the agent to discover diverse skills, resulting in more distinct behaviors, and enabling the selection of distinguishable queries in the early training process.

In skill discovery, the policy is in the form of $\pi(a|s,z)$, where $z\in \mathcal{Z}$ denotes the skill and $\mathcal{Z}$ represents the skill space.
To ensure that the policies generated by each skill have distinct behaviors, we aim to maximize the mutual information between the policy's behavior and its skill.
For tractability, we optimize a variational lower bound, as shown in Eq.~\ref{eq:skill_forward}, where $p$ is the underlying distribution, and $q_\phi$ is learned via maximum likelihood on data sampled from $p$.
\begin{equation}
\begin{aligned}
I(s;z) &= \mathbb{E}_{s,z\sim p(s,z)}[\log p(s|z)] - \mathbb{E}_{s\sim p(s)}[\log p(s)] \\
& \geq \mathbb{E}_{s,z\sim p(s,z)}[\log q_{\phi}(s|z)] - \mathbb{E}_{s\sim p(s)}[\log p(s)]
\end{aligned}
\label{eq:skill_forward}
\end{equation}
Skill discovery methods use intrinsic rewards to encourage agents to explore different behaviors. A typical form of the intrinsic reward is shown in Eq.~\ref{eq:skill_intrinsic_reward}:
\begin{equation}
r^\text{int}=\log q_\phi(s|z)-\log p(s).
\label{eq:skill_intrinsic_reward}
\end{equation}
In this way, after pre-training in this stage, we obtained a policy in the form of $\pi(a|s,z)$. By selecting different skills $z$, we can generate segments with diverse behaviors.

\begin{figure*}[ht]
    \centering
    \includegraphics[width=0.95\linewidth]{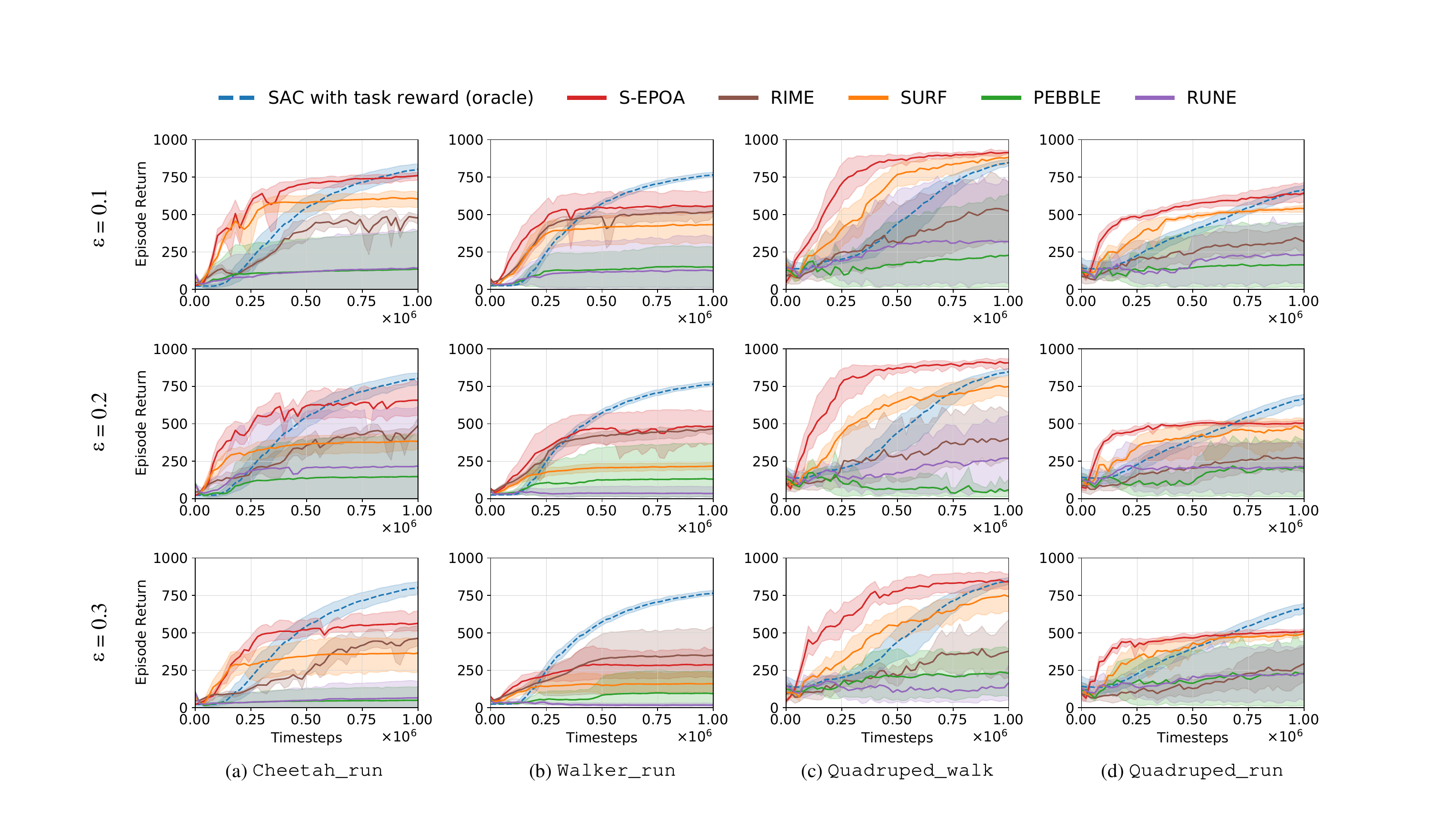}
    \caption{
    Learning curves on locomotion tasks from DMControl, where each row corresponds to a different error rate $\epsilon$, and each column represents a specific task. 
    SAC serves as an oracle, using the ground-truth reward unavailable in PbRL settings. 
        The solid line and shaded regions respectively represent the mean and standard deviation of episode return, across $5$ seeds.
        }
    \label{fig:all_dmcontrol}
\end{figure*}

\subsection{Skill-based Query Selection}
\label{subsec:sd_query_selection}

In this subsection, we focus on selecting distinguishable queries to enhance reward learning. 
To achieve this, the two segments being compared should come from different skills with distinct behaviors and performances. 
We introduce a trajectory estimator $R_{\theta}(z)$ to estimate the expected return of trajectories generated by skill $z$, which helps us identify skills with significant performance differences. The training objective of $R_{\theta}(z)$ is:
\begin{equation}
\begin{aligned}
    & \min \mathcal{L}_{\mathrm{est}}(\theta) = \mathbb{E}_{z}
        \bigg[
    R_{\theta}(z) - \mathbb{E}_{\tau \sim \pi(\cdot|\cdot, z)}\sum_{(s,a)\in\tau} \hat{r}_{\psi}(s,a) 
        \bigg]^2,
        \label{eq: estimate}
\end{aligned}
\end{equation}
where $\tau$ is the trajectory generated by $z$, and $\hat{r}_{\psi}$ is the learned reward model.
In practice, we normalize the targets of $R_{\theta}(z)$ to the range of $[0, 1]$ for training stability.
Based on $R_{\theta}(z)$, we define the skill-based selection criteria $I(\sigma_0, \sigma_1)$ for query $(\sigma_0, \sigma_1)$ with underlying skills $(z_0, z_1)$:
\begin{equation}
\label{eq:I}
\begin{aligned}
    I(\sigma_0, \sigma_1) = 
    (1+ |R_{\theta}(z_0) & - R_{\theta}(z_1)| ) \cdot \\
    & \left(1+ \text{Var}(P_\psi[\sigma_1 \succ \sigma_0]) \right) ,
\end{aligned}
\end{equation}
where $P_\psi$ is the probability that reward model prefers $\sigma_1$ than $\sigma_0$, as defined in Eq.~\ref{eq:BT_model}. 
The first term assesses the difference between the skills, 
while the second term measures the reward model uncertainty, which is commonly used in prior works \cite{lee2021pebble,liang2022reward}.
For training stability, we normalize both terms to the $[0, 1]$ range and add $1$ to balance the two values.

Thus, we propose the skill-based query selection method: 
For each query $(\sigma_0, \sigma_1)$, we calculate the skill-based selection criteria $I(\sigma_0,\sigma_1)$, and select queries with the highest value.
Based on Eq.~\ref{eq:I}, this approach not only considers the query's uncertainty to maximize the information gain, but also considers the differences between skills, ensuring the segments have clearly distinguishable skill explanations.
The specific method is illustrated in Algorithm \ref{alg:sdpbrl_query_selection}.

\begin{algorithm}[t]
\caption{\textsc{Query Selection}}
\label{alg:sdpbrl_query_selection}
\begin{algorithmic}[1]
\STATE Randomly sample $N$ queries $(\sigma_0, \sigma_1)$, where $\sigma_0, \sigma_1$ are segments generated by skills $z_0, z_1$
\STATE Calculate the difference $|R_{\theta}(z_0) - R_{\theta}(z_1)|$
\STATE Calculate the uncertainty $\text{Var}(P_\psi[\sigma_0 \succ \sigma_1])$
\STATE Normalize $|R_{\theta}(z_0) - R_{\theta}(z_1)|$ and $\text{Var}(P_\psi[\sigma_0 \succ \sigma_1])$
\STATE Calculate $I(\sigma_0, \sigma_1)$ for each query
\STATE Select the query with maximum $I(\sigma_0, \sigma_1)$
\end{algorithmic}
\end{algorithm}

\begin{figure*}[ht]
    \centering
        \includegraphics[width=0.75\linewidth]{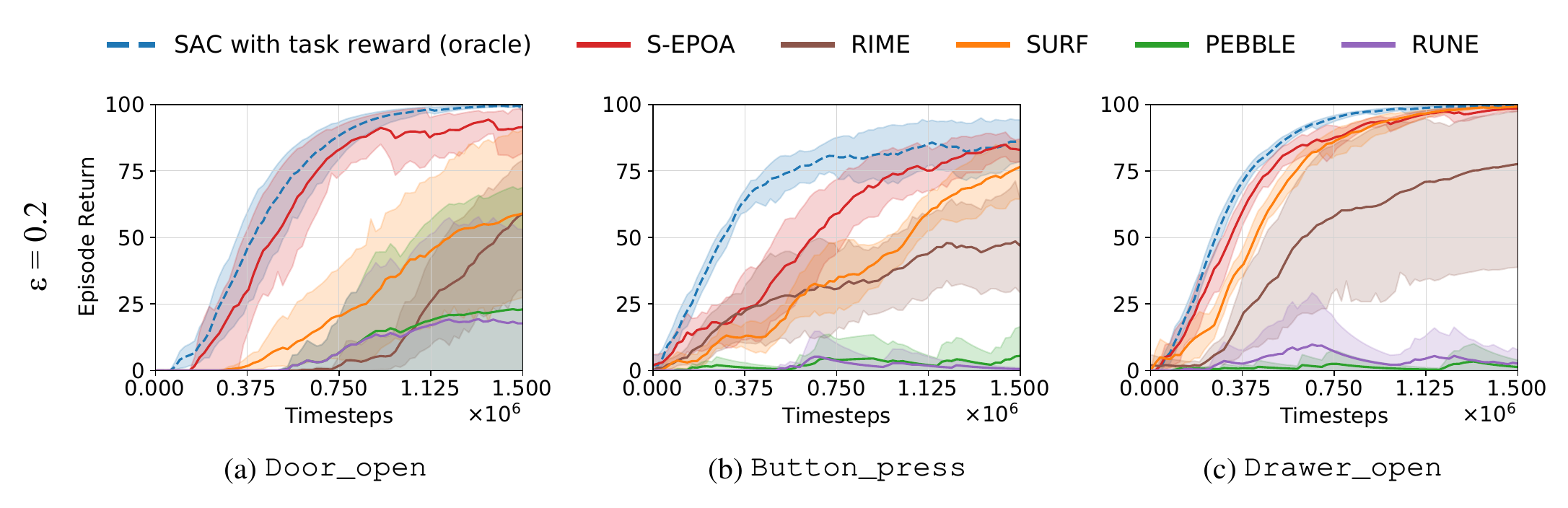}
    \caption{
    Learning curves on locomotion tasks from Metaworld, with error rate $\epsilon=0.2$, across $5$ seeds. 
    }
    \label{fig:all_metaworld}
\end{figure*}

\subsection{Implementation Details}

To convert the unsupervised pretraining policy $\pi(a|s,z)$ in Section~\ref{subsec:aps_pretrain} to PbRL's policy $\pi(a|s)$ in Section~\ref{subsec:sd_query_selection},
we attempt to obtain the skill nearest to the current task, i.e., performs the best under the current estimated reward function. This skill is denoted as $z_\text{task}$.
Intuitively, $z_\text{task}$ can be derived by solving the optimization problem in Eq.~\ref{eq:get_z_task}:
\begin{equation}
    z_\text{task} = \arg\max_z \mathbb{E}_{\mu_0,\pi(a_t|s_t,z)}\left[\sum_{t=0}^{\infty}\gamma^t \hat{r}(s_t, a_t)\right]
\label{eq:get_z_task}
\end{equation} 
Using the learned trajectory estimator $R_\theta(z)$, we sample $N_z$ skills $z_i$ from $\mathcal{Z}$ uniformly, and approximate $z_\text{task}$ with
\begin{equation}
    \hat z_\text{task} = \arg\max_{z_i} \{R_\theta(z_i)\}_{i=1}^{N_z}.
\label{eq:z_task_maxR}
\end{equation}

Based on the above discussion, we obtain the overall process of \method, as shown in Algorithm~\ref{alg:sepoa_full_version}.
Firstly, we initialize the policy with skill-based unsupervised pretraining.
Then, for each feedback session, we update the trajectory estimator as in Eq.~\ref{eq: estimate}, and select queries based on the skill-based selection criteria in Eq.~\ref{eq:I}. 
The PbRL reward model $\hat r_\psi$ is trained using the selected queries based on Eq.~\ref{eq:CE_loss}.
Finally, we train the critic and actor using the learned reward function $\hat{r}_{\psi}$.

Besides the two key components in Section \ref{subsec:aps_pretrain} and \ref{subsec:sd_query_selection}, we also adopt the semi-supervised data augmentation technique for reward learning \cite{park2022surf}. 
To elaborate, we randomly sub-sample several shorter pairs of $(\hat \sigma_0, \hat \sigma_1)$ from the queried segments $(\sigma_0, \sigma_1, y)$, and use these $(\hat \sigma_0, \hat \sigma_1, y)$ to optimize the cross-entropy loss in Eq. \ref{eq:CE_loss}. 
Moreover, we sample a batch of unlabeled segments $(\sigma_0, \sigma_1)$, generate the artificial labels $\hat y$, if $P_\psi[\sigma_0\succ \sigma_1]$ or $P_\psi[\sigma_1\succ \sigma_0]$ reaches a predefined confidence threshold.

\begin{figure*}[ht]
    \centering
    \subfloat[\label{subfig:component_analysis}]{
        \includegraphics[width=0.21\linewidth]{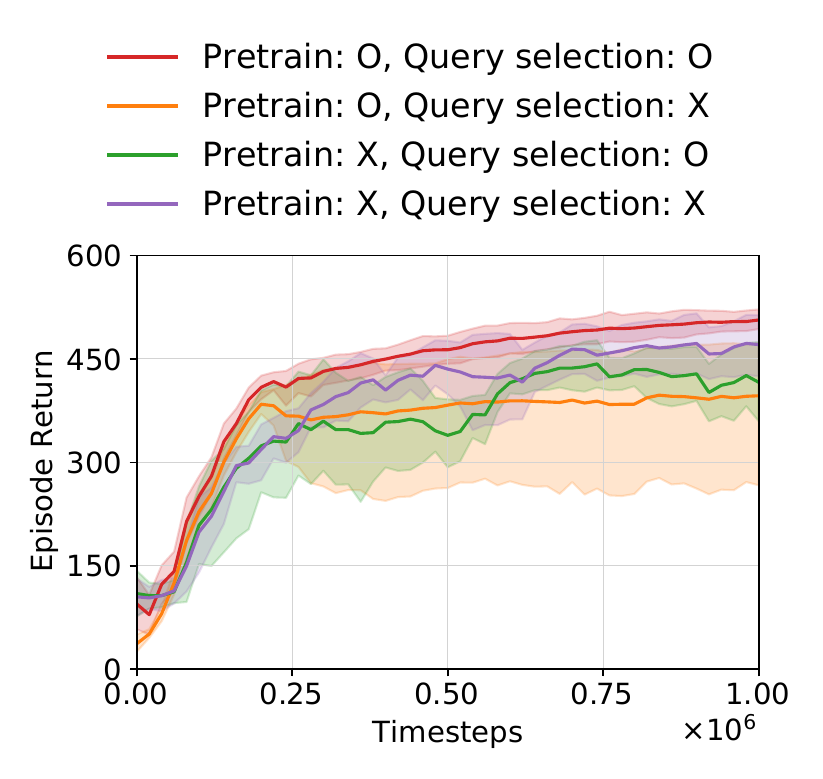}}
    \subfloat[\label{subfig:le_error_0}]{
        \includegraphics[width=0.21\linewidth]{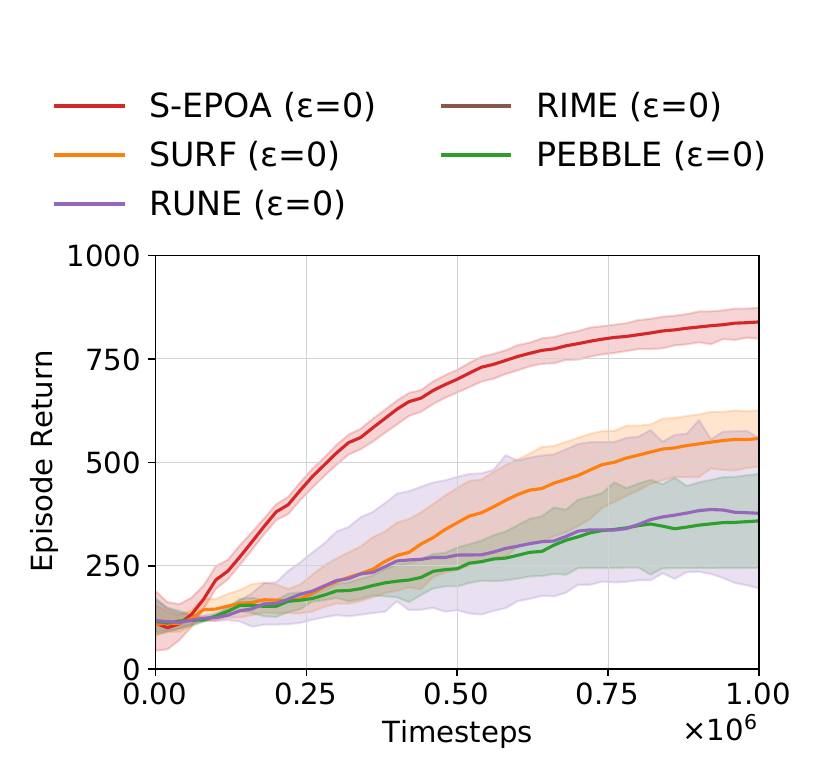}}
    \subfloat[\label{subfig:other_skills}]{
        \includegraphics[width=0.21\linewidth]{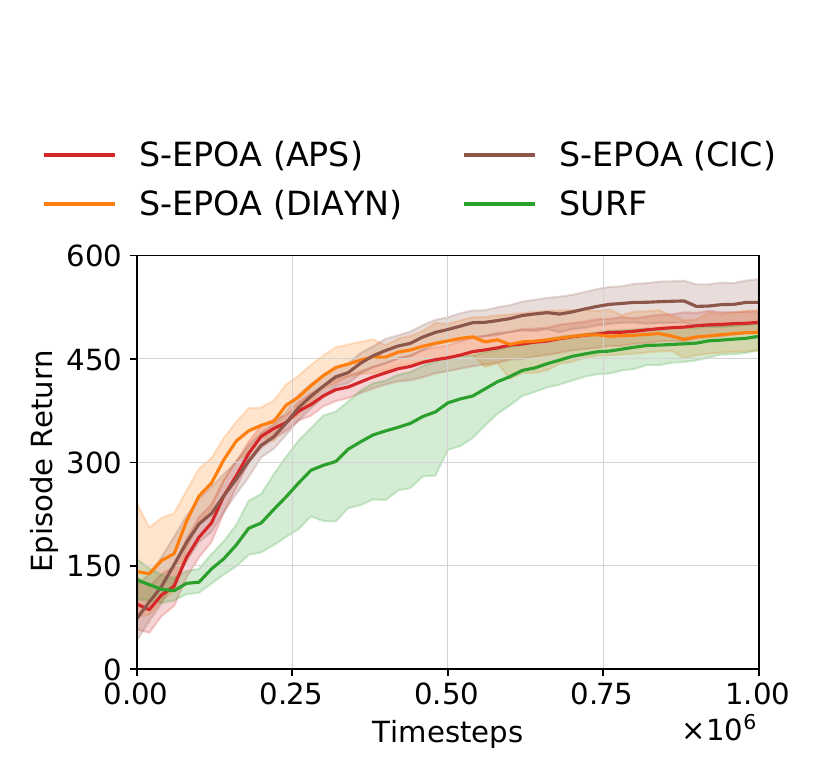}}
    \subfloat[\label{subfig:all_nosurf}]{
        \includegraphics[width=0.21\linewidth]{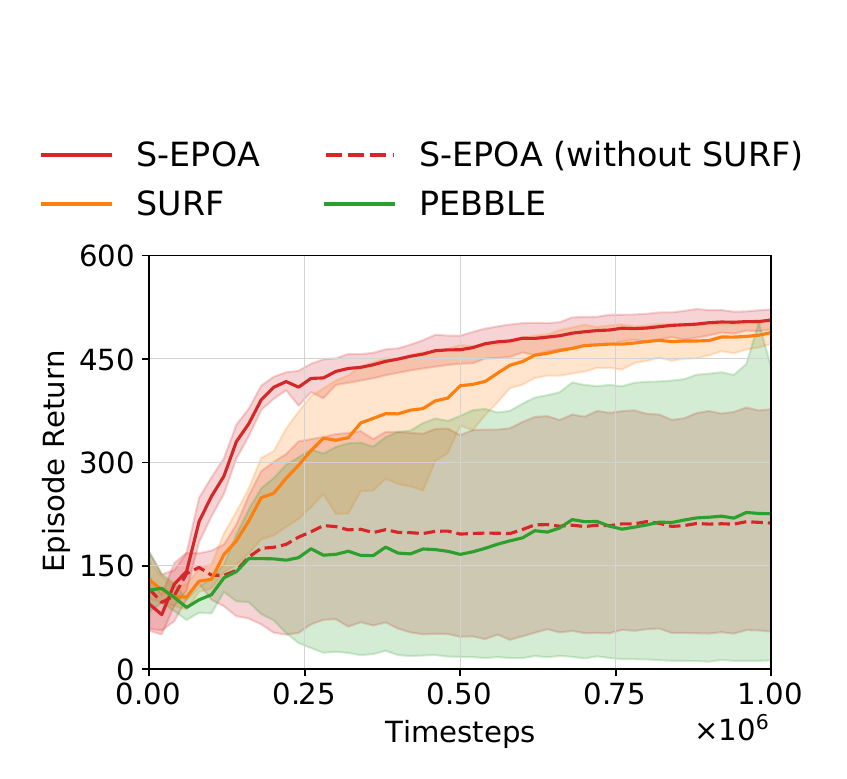}}
    \cprotect\caption{%
    Ablation studies on the \verb|Quadruped_run| task.
    (a) Contribution of each technique in \method, under $\epsilon=0.3$.
    (b) Demonstration of enhanced learning efficiency of \method under the ideal scripted teacher with error rate $\epsilon=0$.
    (c) The learning curve of \method and baselines, with and without data augmentation under $\epsilon=0.3$.
    (d) Integrating \method with other skill discovery methods, under $\epsilon=0.3$.
    }
    \label{fig:ablation}
\end{figure*}

\section{Experiments}

We design our experiments to answer the following questions:
\textit{Q1:} How does \method compare to other state-of-the-art methods under non-ideal teachers?
\textit{Q2:} Can \method select queries with higher distinguishability?
\textit{Q3:} Can \method be integrated with various skill discovery methods? 
\textit{Q4:} What is the contribution of each of the proposed techniques in \method?

\subsection{Setup}

\paragraph{Domains. }
We evaluate \method on several complex robotic manipulation and locomotion tasks from DMControl \cite{tassa2018dmcontrol} and Metaworld \cite{yu2020metaworld}. 
Specifically, We choose $4$ complex tasks in DMControl: \verb|Cheetah_run|, \verb|Walker_run|, \verb|Quadruped_walk|, \verb|Quadruped_run|, and $3$ complex tasks in Metaworld:  \verb|Door_open|, \verb|Button_press|, \verb|Window_open|.
The details of experimental tasks are shown in Appendix \ref{app:tasks}.

\paragraph{Baselines and Implementation. }
We compare \method with several state-of-the-art methods, including PEBBLE~\cite{lee2021pebble}, SURF~\cite{park2022surf}, RUNE~\cite{liang2022reward}, and RIME~\cite{cheng2024rime}, a robust PbRL method. We also train SAC with ground truth rewards as a performance upper bound.
For PEBBLE, SURF, and RUNE, we employ the disagreement query selection, which performs the best among all the query selection methods.
In our experiment, we use APS \cite{liu2021aps} for unsupervised skill discovery. The impact of skill discovery methods is discussed in the ablation study.
More implementation details are provided in Appendix \ref{app:implement_detail} and \ref{app:skill_discovery}.

\paragraph{Noisy scripted teacher imitating humans.}
Following prior works \cite{lee2021pebble,liang2022reward},  we use a scripted teacher for systematic evaluation, which provides preferences between segments based on the sum of ground-truth rewards. 
To better mimic human decision-making uncertainty, we introduce a noisy scripted teacher.
When the performance difference between two policies is marginal, humans often struggle to distinguish between them, as Section \ref{sec:gap} shows.
To imitate this, we implement an error mechanism: if the ground truth returns of two trajectories are nearly identical, we randomly assign a preference label of $0$ or $1$. 
The core idea is to evaluate policy performance by comparing the overall returns of entire trajectories, which aligns with how humans assess policies based on their overall effectiveness.
Specifically, for a query $(\sigma_0, \sigma_1)$ with underlying trajectories $(\tau_0, \tau_1)$ and ground truth reward function $r_\text{gt}$, if 
\begin{equation}
    \bigg|
    \begin{aligned}
    \sum_{(s,a)\in\tau_0} r_\text{gt}(s,a) - \sum_{(s,a)\in\tau_1} r_\text{gt}(s,a)
    \end{aligned}
    \bigg|
    < \epsilon \cdot R_\text{avg} ,
\end{equation}
a random label is assigned. 
Here, $R_\text{avg}$ is the average return of the most recent ten trajectories. 
We refer to $\epsilon\in(0,1)$ as the error rate. 
For fairness, we constrain that the two segments in a query come from different trajectories.
It is important to note that our noisy teacher differs from the ``mistake'' teacher in B-Pref \cite{lee2021b}, which randomly flips correct labels. Our teacher only introduces errors in too-close queries.

\begin{figure}
    \centering
    \includegraphics[width=0.9\linewidth]{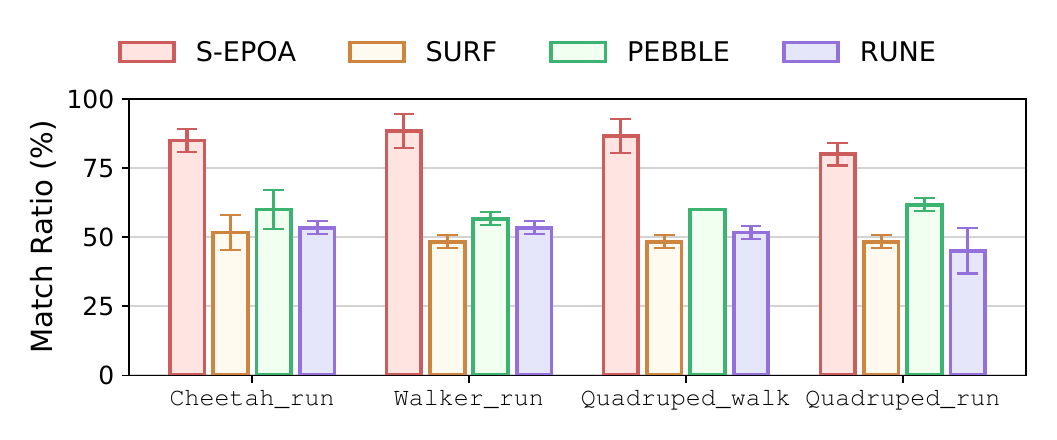}
    \caption{Human-labeled preferences match ratio with ground truth. Queries selected by \method are more distinguishable than others.}
    \label{fig:human_like_sepoa}
\end{figure}

\begin{figure*}[t]
    \centering
    \subfloat[\label{fig:segment_disagreement}]{\includegraphics[width=0.4\linewidth]{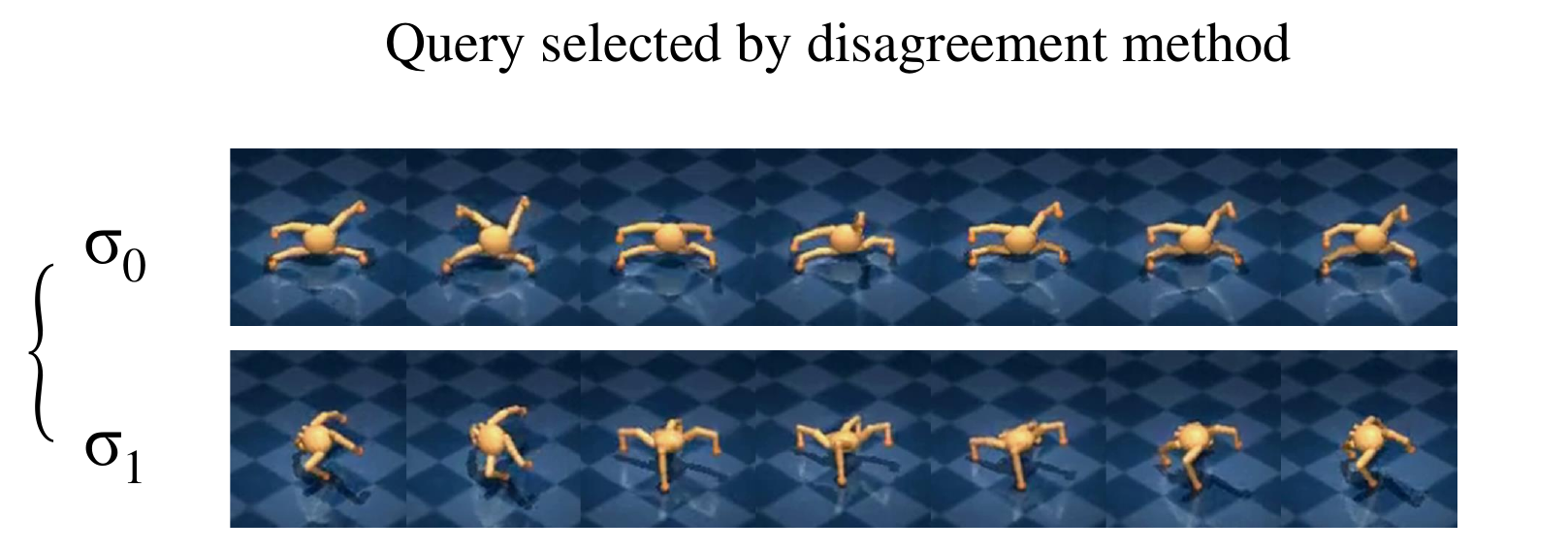}}
    \subfloat[\label{fig:c}]{\includegraphics[width=0.4\linewidth]{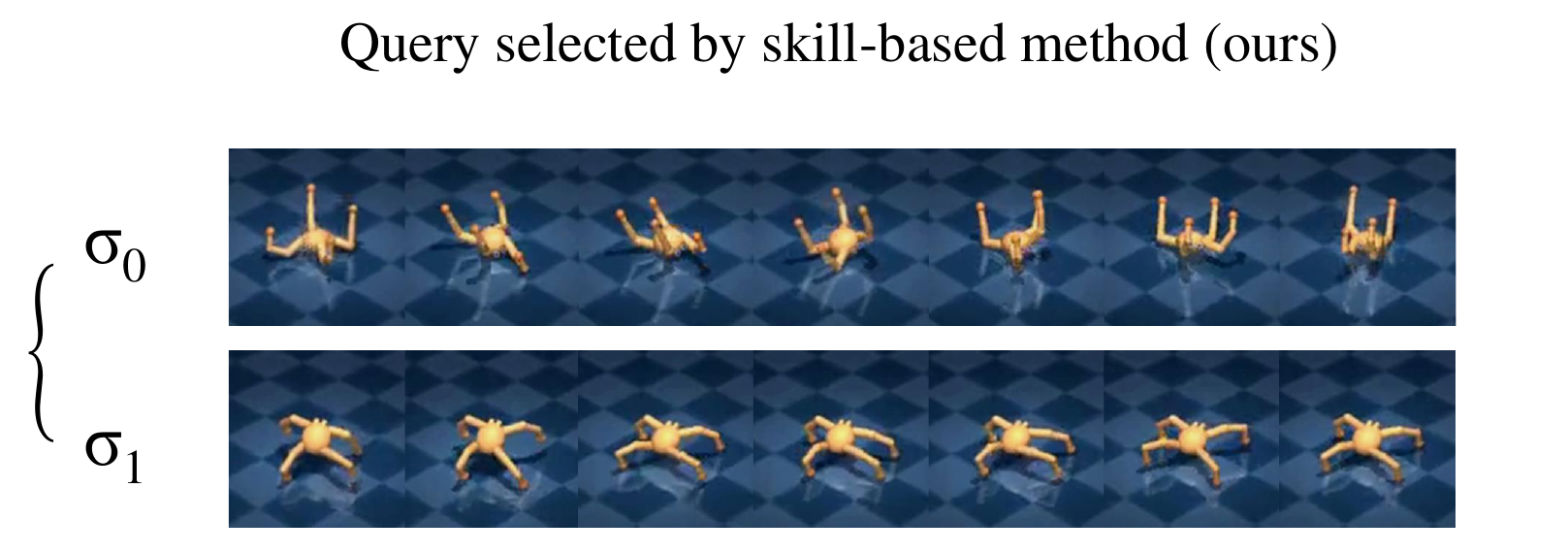}}
    \cprotect\caption{%
    Visualization of segment pairs selected by (a) disagreement mechanism and (b) skill-based mechanism, under the \verb|Quadruped_run| task, with error rate $\epsilon=0.3$. 
    }
    \label{fig:skill_visualization}
\end{figure*}

\subsection{Results on Benchmark Tasks}

\paragraph{Locomotion tasks in DMControl. }
Figure \ref{fig:all_dmcontrol} shows the learning curves of \method and baselines on the four DMControl tasks with error rates $\epsilon \in \{0.1, 0.2, 0.3\}$. \method outperforms baselines in most environments and remains robust under non-ideal conditions, while other PbRL methods are unstable and even fail.

\paragraph{Robotic manipulation tasks in Metaworld. }
Figure \ref{fig:all_metaworld} shows the learning curves for \method and baselines on the three Metaworld tasks with $\epsilon=0.2$. These results further demonstrate that \method improves robustness against non-ideal feedback across diverse tasks.

\paragraph{Enhanced query distinguishability. }
To evaluate that \method selects more distinguishable queries, we compare the ratios of queries that can be distinguished by the noisy teacher, for both \method and disagreement query selection methods. As shown in Table~\ref{tab:error_ratio}, \method achieves higher distinguishability ratios across all environments, demonstrating its effectiveness in selecting easily differentiable queries. 
To further assess the distinguishability of queries, we conduct human experiments. We compare \method with three state-of-the-art PbRL methods in DMControl tasks, where labelers provide $20$ preference labels for each run, across $3$ random seeds. As shown in Figure \ref{fig:human_like_sepoa}, humans find the queries selected by \method easier to distinguish. Please refer to Appendix \ref{app:human_exp} for experimental details.

\paragraph{Query visualizations.}
We visualize the segment pairs selected by both \method and the disagreement mechanism. As shown in Figure \ref{fig:skill_visualization}, the pair chosen by the disagreement mechanism has similar behaviors, while the pair selected by \method clearly differs, with $\sigma_1$ being preferred. 
These results confirm that \method can select more distinguishable queries, enhancing both the labeling accuracy of noisy teachers and the quality of human preference labeling.

\subsection{Ablation Study}
\label{subsec:ablation}

\paragraph{Component analysis.} 

To evaluate the effect of each technique in \method individually, we incrementally apply skill-based unsupervised pretraining and skill-based query selection to our backbone algorithm. 
Figure~\ref{fig:ablation}\subref{subfig:component_analysis} shows the learning curves on the \verb|Quadruped_run| task with error rate $\epsilon=0.3$. 
First, skill-based unsupervised pretraining significantly boosts performance, for both skill-based (red vs. green) and disagreement-based query selection (orange vs. blue). 
This improvement is because the pretrained policy generates diverse behaviors, which leads to a better reward function.
Additionally, skill-based query selection enhances results (red vs.~orange), as it selects more distinguishable queries, enabling the agent to obtain more accurate labels.
In summary, both components of \method are effective, and their combination is crucial to the method’s success.


\begin{table}[t]
    \centering
    \begin{tabular}{c|cc}
    \toprule
         & Disagreement & Skill-based \\
    \midrule
    \verb|Cheetah_run|     & 0.3270 & \textbf{0.4839} \\
    \verb|Walker_run|      & 0.2448 & \textbf{0.4648} \\
    \verb|Quadruped_walk|  & 0.3570 & \textbf{0.3800} \\
    \verb|Quadruped_run|   & 0.2545 & \textbf{0.2856} \\
    \bottomrule
    \end{tabular}
    \caption{Ratios of queries that can be distinguished by the noisy scripted teacher for both skill-based and disagreement query selection methods, with $\epsilon=0.3$.}
    \label{tab:error_ratio}
\end{table}

\begin{table}[t]
\centering
\begin{tabular}{ccc}
\toprule
\# of   Queries & \method         & SURF            \\
\midrule
500             & 444.25 ± 38.94  & 403.85 ± 116.18 \\
1000            & 464.36 ± 37.04  & 438.29 ± 95.56  \\
2000            & 506.57 ± 19.24  & 488.33 ± 21.78  \\
10000           & 568.86 ± 197.01 & 468.49 ± 22.92  \\
\bottomrule
\end{tabular}
\caption{Performance of \method and SURF using different numbers of queries, under $\epsilon=0.3$.}
\label{tab:sample_efficiency}
\end{table}




\paragraph{Enhanced learning efficiency under the ideal teacher. }
Under the ideal scripted teacher ($\epsilon = 0$), \method can also significantly enhance learning efficiency. 
As shown in Figure~\ref{fig:ablation}\subref{subfig:le_error_0}, \method converges faster and achieves better final performance. 
This advantage results from selecting skills with high distinguishability, highlighting the robustness and effectiveness of our approach.

\paragraph{Integrate \method with other skill discovery methods. }
To show that \method is integratable with various skill learning methods, we replace APS with DIAYN \cite{eysenbach2019diversity} and CIC \cite{laskin2022cic}, which are typical and commonly used skill discovery methods.
The details of these methods are shown in Appendix \ref{app:skill_discovery}.
As is shown in Figure~\ref{fig:ablation}\subref{subfig:other_skills}, the performance is consistent and similar across all skill discovery methods, which demonstrates that \method is compatible with various skill discovery methods.

\paragraph{Enhanced query efficiency. }
We compare the performance of \method and SURF using different numbers of queries.
As is shown in Table \ref{tab:sample_efficiency}, \method outperforms SURF consistently, even if only 500 queries are provided, which demonstrates the ability of \method to make better use of the limited queries.

\paragraph{Effect of data augmentation in \method. }
We conduct an ablation study to evaluate the impact of data augmentation in \method. 
Figure \ref{fig:ablation}\subref{subfig:all_nosurf} shows that without data augmentation, both \method and the baselines (PEBBLE as the backbone) show similar performance deficiencies. 
This highlights the importance of data augmentation for achieving optimal performance, while it is not the sole factor in our method’s success and does not undermine the innovation of our approach.

\section{Related Work}

\paragraph{Preference-based reinforcement learning.}
PbRL enables humans (or supervisors in other forms, like script teachers) to guide the RL agent toward desired behaviors by providing preferences on segment pairs, where the feedback efficiency is a primary concern \cite{lee2021pebble,park2022surf}.
Prior works improve the feedback efficiency from various perspectives.
Some works focus on the query selection scheme, trying to improve the information quality of queries \cite{ibarz2018reward,biyik2020active,hejna2023few}.
Some works integrate unsupervised pretraining to avoid the waste on initial nonsense queries \cite{lee2021pebble}.
Some works augment queries from humans to better utilize limited human feedback \cite{park2022surf,liu2023efficient}.
These methods depend on reliable feedback.
However, humans could make mistakes, especially when the segment pair for comparison is slightly different, which restricts and even harms the performance in practice \cite{lee2021b,cheng2024rime}.

\paragraph{Unsupervised pretraining for RL.}
Unsupervised pretraining has been well studied in RL \cite{xie2022pretraining}, 
where unlabeled data (i.e., transitions without task-specific rewards) are used to learn a policy or set of policies that effectively explore the state space through intrinsic rewards. 
The method to calculate the intrinsic reward varies in different unsupervised pretraining works, including uncertainty measures like prediction errors \cite{pathak2017curiosity,pathak2019self,burda2019exploration}, state entropy \cite{hazan2019provably,liu2021behavior}, pseudo-counts \cite{bellemare2016unifying,ostrovski2017count} and empowerment measures like mutual information \cite{eysenbach2019diversity,sharma2020dynamics,liu2021aps,park2022lipschitz,park2023metra}.
The learned policy could serve as a strong initialization for downstream tasks, enhancing the sample efficiency in multi-task and few-shot RL.

\paragraph{Unsupervised skill discovery methods.}
Unsupervised skill discovery methods are a subset of unsupervised pretraining methods, which use empowerment measures as the intrinsic reward, trying to discover a set of distinguishable primitives.
Mutual information $I(s,z)$ is a common choice for the empowerment measure, where $s$ is a state, and $z$ is a latent variable indicating the skill. 
Some studies consider the reverse form $I(s,z)=H(z)-H(z|s)$~\cite{eysenbach2019diversity,park2022lipschitz}, which train a parameterized skill discriminator $q(z|s)$ together with the policy.
On the other hand, the forward form $I(s,z)=H(s)-H(s|z)$ \cite{sharma2020dynamics,liu2021aps,laskin2022cic} can be integrated with model-based RL and state entropy-based unsupervised pretraining algorithms. 
Additionally, some studies design the skill latent space for unique properties by parameterizing the distribution $q(z|s)$ or $q(s|z)$.
VISR \cite{hansen2019fast} and APS \cite{liu2021aps} let the latent $z$ be the successor feature to enable fast task inference.
LSD \cite{park2022lipschitz} and METRA \cite{park2023metra} bind the distance in state space and latent space to force a significant travel distance in a trajectory, thereby capturing dynamic skills.

\section{Conclusion}



This paper presents \method, a robust and efficient PbRL algorithm designed to address the segment indistinguishability issue. 
By leveraging skill mechanisms, \method learns diverse behaviors through unsupervised learning and generates distinguishable queries through skill-based query selection. 
Experiments show that \method outperforms state-of-the-art PbRL methods in robustness and efficiency, with ablation studies confirming the effectiveness of skill-based query selection. 
In future work, we aim to extend \method to broader applications.


\section*{Acknowledgments}
This work is supported by the NSFC (No. 62125304, 62192751, and 62073182), the Beijing Natural Science Foundation (L233005), BNRist project (BNR2024TD03003), and the 111 International Collaboration Project (B25027).

\bibliographystyle{ijcai25_named}
\bibliography{ref}

\clearpage
\appendix
\onecolumn

\section {Proof}

\paragraph{Relationship between the similarity of segment pairs and disagreement.}
\label{sec:proof-disagreement}




\disagreementbad*

\begin{proof}
    Since $\hat r_1$ and $\hat r_2$ are independent Gaussian distributed, $\delta=\hat r_1-\hat r_2$ is also Gaussian distributed, i.e. $\delta \sim N(\Delta, \sqrt{2c^2})$.
    Substitute $\delta$ into Eq. \ref{eq:sigmoid-perf}, we find $P[\sigma_1\succ\sigma_2]$ is logit-normal distributed, whose moments have no analytic solution.
    
    As in \cite{huber2020bayesian}, we approximate the sigmoid function with the probit function using the input scaling factor $\lambda=\sqrt{\pi/8}$ \cite{kristiadi2020being}, leading to an approximation of $\mathrm{Var}[\hat P[\sigma_1\succ\sigma_2]]$:
    \begin{equation}
        \label{eq:pf2}
        \mathrm{Var}[\hat P[\sigma_1\succ\sigma_2]]\approx\mu_s(1-\mu_s)(1-\frac{1}{t}),
    \end{equation}
    where $\mu_s\approx\mathrm{sigmoid}(\Delta/t)\in[\frac{1}{2},1)$ is the approximation of $\mathbb{E}[\hat P[\sigma_1\succ\sigma_2]]$ derived in a similar manner, and $t=\sqrt{1+2\lambda^2c^2}$ is a constant \cite{huber2020bayesian}.
    
    Using Eq.\ref{eq:pf2}, it is straightforward to check the monotonicity of $\mathrm{Var}[\hat P[\sigma_1\succ\sigma_2]]$, which concludes the proof.
\end{proof}

\section{Experimental Details}
\label{app:experiment_detail}

\subsection{Tasks}
\label{app:tasks}
The locomotion tasks from DMControl \cite{tassa2018dmcontrol} and robotic manipulation tasks from Metaworld \cite{yu2020metaworld} used in our experiments are shown in Figure \ref{fig:env examples}.

\begin{figure*}[ht]
\centering
\captionsetup[subfloat]{captionskip=-8pt}

\includegraphics[width=0.25\linewidth]{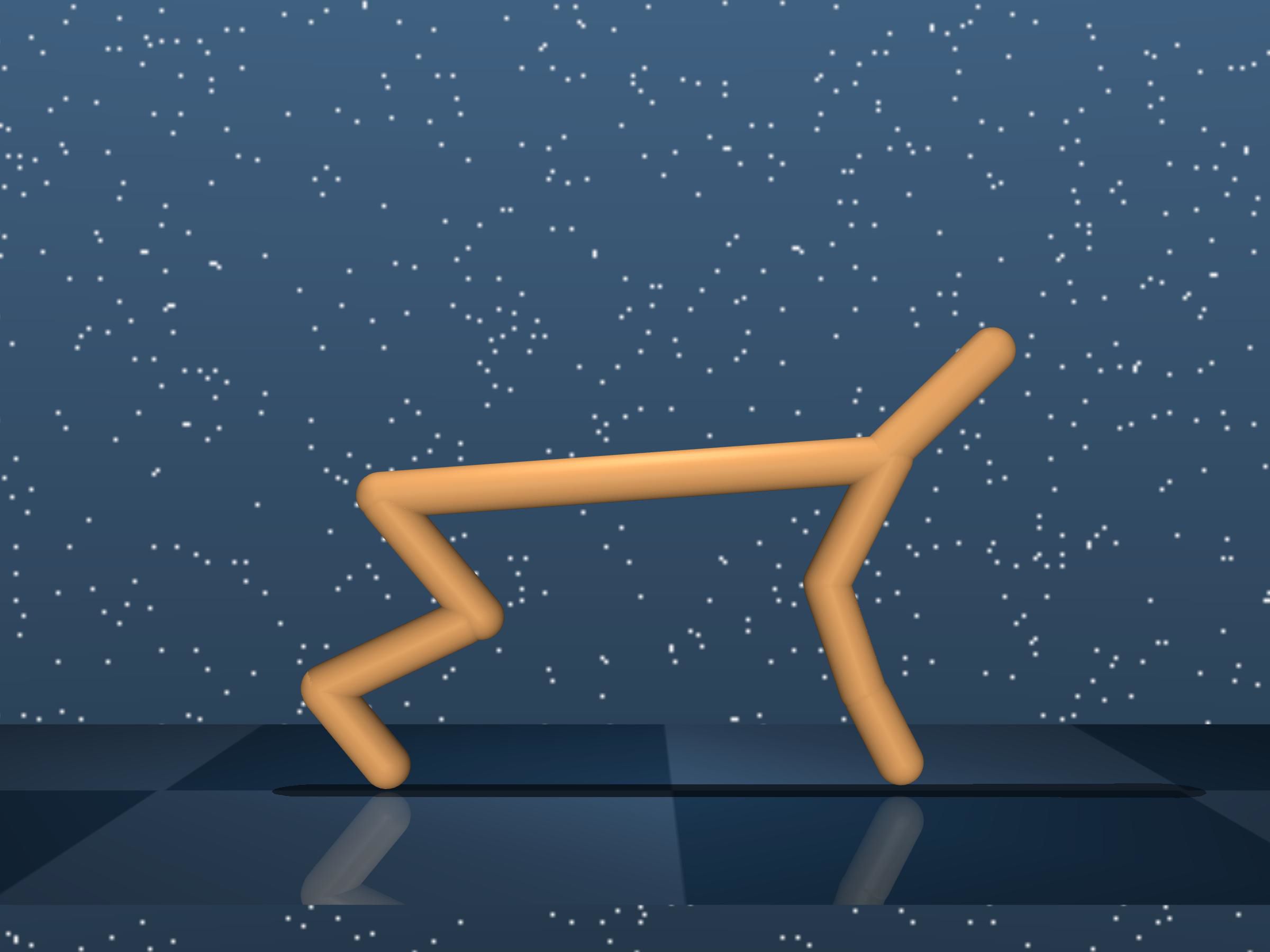}
\includegraphics[width=0.25\linewidth]{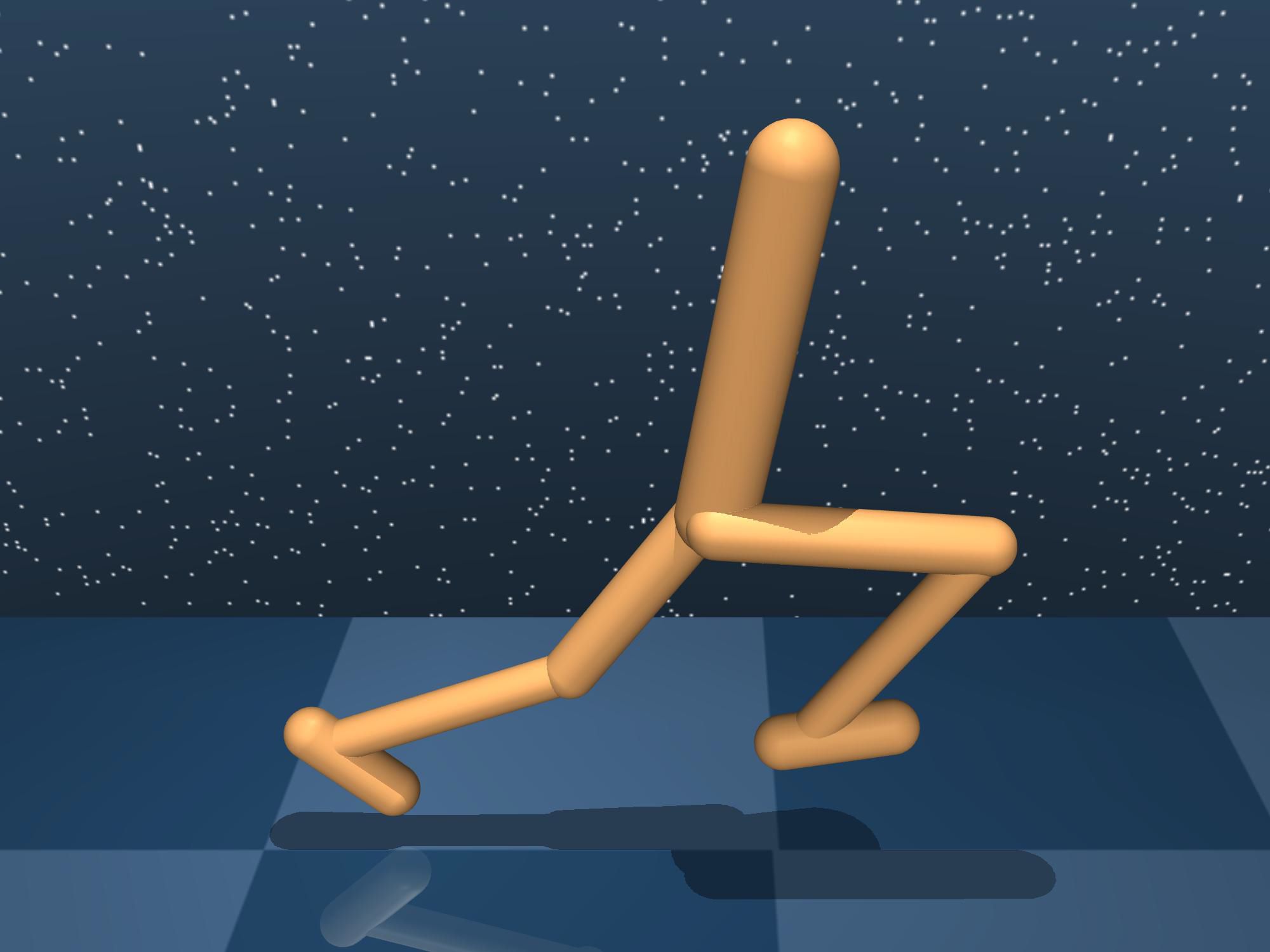}
\includegraphics[width=0.25\linewidth]{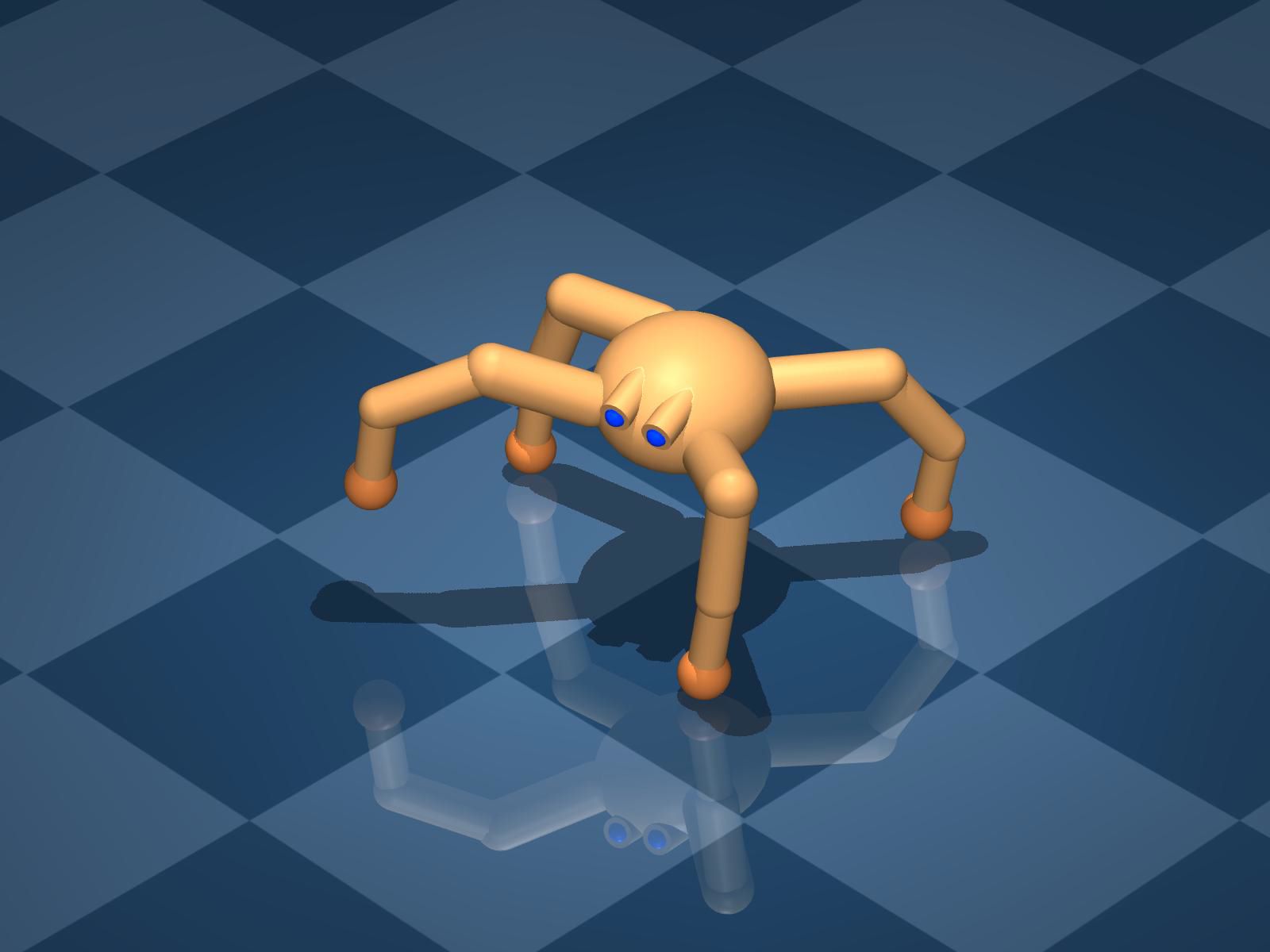}
\\
\subfloat[Cheetah\_run]{\hspace{0.256\linewidth}}
\subfloat[Walker\_run]{\hspace{0.256\linewidth}}
\subfloat[{\tiny Quadruped\_walk and Quadruped\_run}]{\hspace{0.256\linewidth}}
\\
\includegraphics[width=0.25\linewidth]{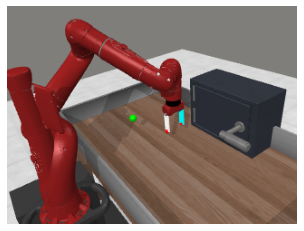}
\includegraphics[width=0.25\linewidth]{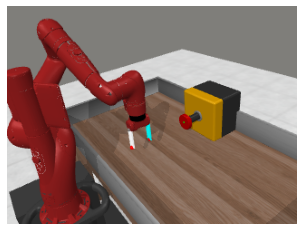}
\includegraphics[width=0.25\linewidth]{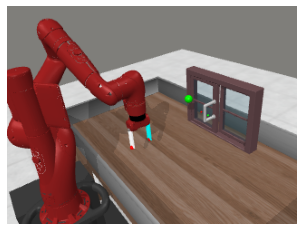}
\\
\subfloat[Door\_open]{\hspace{0.256\linewidth}}
\subfloat[Button\_press]{\hspace{0.256\linewidth}}
\subfloat[Window\_open]{\hspace{0.256\linewidth}}

\caption{Seven tasks from DMControl (a-c) and Metaworld (d-f).}
\label{fig:env examples}
\end{figure*}

\noindent \textbf{DMControl Tasks:}
\begin{enumerate} 
    \item \verb|Cheetah_run|: A planar biped is trained to control its body and run on the ground.
    \item \verb|Walker_run|: A planar walker is trained to control its body and walk on the ground.
    \item \verb|Quadruped_walk|: A four-legged ant is trained to control its body and limbs, and crawl slowly on the ground.
    \item \verb|Quadruped_run|: A four-legged ant is trained to control its body and limbs, and crawl fast on the ground.
\end{enumerate}

\noindent \textbf{Metaworld Tasks:}
\begin{enumerate} 
    \item \verb|Door_open|: An agent controls a robotic arm to open a door with a random position.
    \item \verb|Button_press|: An agent controls a robotic arm to press a button at a random position.
    \item \verb|Window_open|: An agent controls a robotic arm to open a window with a random position.
\end{enumerate}

\subsection{Implementation Details}
\label{app:implement_detail}

The APS, DIAYN and CIC algorithms used in \method are implemented based on the official repository of URLB \cite{laskin2021urlb}. We changed the DDPG backbone of URLB to SAC, as PEBBLE, SURF, and RUNE are all based on SAC. Also, for APS, to encourage exploration in the pretraining stage, we add a hyperparameter $\beta>1$ for the entropy term in intrinsic reward (Eq. \ref{eq:aps_intrinsic_reward}):
\begin{equation}
    r^\text{int}(s,a,s') = \phi(s)^Tz - \beta\log p(s'),
\end{equation}

For the implementation of SAC, PEBBLE, SURF, and RUNE, we refer to their corresponding official repositories and re-implement them within the URLB framework. For RIME, we use its official implementation.
SAC serves as a performance upper bound because it uses a ground-truth reward function, which is unavailable in PbRL settings for training. The detailed hyperparameters of SAC are shown in Table \ref{table:hyperparameters_sac}. 
PEBBLE's settings remain consistent with its original implementation, and the specifics are detailed in Table \ref{table:hyperparameters_pebble}. 
For SURF, RUNE, RIME and \method, most hyperparameters are the same as those of PEBBLE, and other hyperparameters are detailed in Table \ref{table:hyperparameters_surf}, \ref{table:hyperparameters_rune}, \ref{table:hyperparameters_rime} and \ref{table:hyperparameters_sepoa}, respectively. 
The total amount of feedback and feedback amount per session are detailed in Table \ref{table:hyperparameters_condition}.

For the component analysis of ablation studies in Section. \ref{subsec:ablation}, we remove skill-based unsupervised pretraining from \method (green and blue curves) by setting the pretraining step = 0 and randomly initializing the APS policy.
As for the skill-based query selection, we replace it with the disagreement scheme in PEBBLE and SURF (orange and blue curves).

Code of \method is available at \texttt{\url{https://github.com/MoonOutCloudBack/SEPOA\_PbRL}}.

\begin{table}[ht]
\centering
\begin{tabular}{ll}
\toprule
\textbf{Hyperparameter}~~~~~~~~~~~~~~~~~~~~ & \textbf{Value}~~~~~~~~~~~~~~~~~~~~ \\
\midrule
Number of layers            & $2$ (DMControl), \\
                            & $3$ (Metaworld) \\
Hidden units per layer      & $1024$ (DMControl), $256$ (Metaworld) \\
Activation function         & ReLU \\
Optimizer                   & Adam \\ 
Learning rate               & $0.0005$ (DMControl), $0.0001$ (Metaworld) \\
Initial temperature         & $0.2$ \\
Critic target update freq   & $2$ \\
Critic EMA $\tau$           & $0.01$ \\ 
Batch Size                  & $1024$ (DMControl), $512$ (Metaworld) \\
$(\beta_1,\beta_2)$         & $(0.9, 0.999)$ \\
Discount $\gamma$           & $0.99$ \\
\bottomrule
\end{tabular}
\caption{Hyperparameters of SAC.}
\label{table:hyperparameters_sac}
\end{table}

\begin{table}[ht]
\centering
\begin{tabular}{ll}
\toprule
\textbf{Hyperparameter} & \textbf{Value}~~~~~~~~~~~~~~~~~~~~~~~~~~~~~~~~~ \\
\midrule
Segment length                  & $50$ \\
Learning rate                   & $0.0005$ (DMControl), $0.0001$ (Metaworld) \\
Feedback frequency              & $20000$ (DMControl), $5000$ (Metaworld) \\
Num of reward ensembles         & $3$ \\
Reward model activator          & tanh \\
Unsupervised pretraining steps  & $9000$ \\
\bottomrule
\end{tabular}
\caption{Hyperparameters of PEBBLE.}
\label{table:hyperparameters_pebble}
\end{table}

\begin{table}[ht]
\centering
\begin{tabular}{ll}
\toprule
\textbf{Hyperparameter} & \textbf{Value}~~~~~~~~~~~~~~~~~ \\
\midrule
Unlabeled batch ratio $\mu$         & $4$ \\
Threshold $\tau$                    & $0.999$  \\
Loss weight $\lambda$               & 1 \\
Min/Max length of cropped segment   & $45/55$ \\
Segment length before cropping      & $60$ \\
\bottomrule
\end{tabular}
\caption{Hyperparameters of SURF.}
\label{table:hyperparameters_surf}
\end{table}

\begin{table}[ht]
\centering
\begin{tabular}{ll}
\toprule
\textbf{Hyperparameter}~~~~ & \textbf{Value}~~~~~~~~~~~~~~~~~~ \\
\midrule
Initial weight of intrinsic reward $\beta_0$    & $0.05$ \\
Decay rate $\rho$    & $0.001$  \\ 
\bottomrule
\end{tabular}
\caption{Hyperparameters of RUNE.}
\label{table:hyperparameters_rune}
\end{table}

\begin{table}[ht]
\centering
\begin{tabular}{ll}
\toprule
\textbf{Hyperparameter} & \textbf{Value} \\
\midrule
Coefficient $\alpha$ in the lower bound $\tau_\text{lower}$ & $0.5$ \\
Minimum weight $\beta_{\min}$ & 1 \\
Maximum weight $\beta_{\max}$ & 3 \\
Decay rate $k$ & 1/30 (DMControl), 1/300 (Metaworld) \\
Upper bound $\tau_\text{upper}$ & $3\ln(10)$ \\
$\delta$ for the intrinsic reward & $1 \times 10^{-8}$ \\
Steps of unsupervised pre-training & $2000$ (Cheetah), $9000$ (others) \\
\bottomrule
\end{tabular}
\caption{Hyperparameters of RIME.}
\label{table:hyperparameters_rime}
\end{table}

\begin{table}[ht]
\centering
\begin{tabular}{ll}
\toprule
\textbf{Hyperparameter}~~~~~~~~~~~~~~~~~~~~~~ & \textbf{Value}~~~~~~~~~~~~~~~~~~~~~~ \\
\midrule
Dim of task vector $z$                      & $10$ \\ 
APS pretraining steps                       & \\
~~~~-~~~~ \verb|Cheetah_run|                        & $5\times 10^5$ \\
~~~~-~~~~ \verb|Walker_run|                         & $5\times 10^5$ \\
~~~~-~~~~ \verb|Window_open|                        & $5\times 10^5$ \\
~~~~-~~~~ \verb|Button_press|                       & $5\times 10^5$ \\
~~~~-~~~~ \verb|Quadruped_walk|                     & $1\times 10^6$ \\
~~~~-~~~~ \verb|Quadruped_run|                      & $1\times 10^6$ \\
~~~~-~~~~ \verb|Door_open|                          & $1\times 10^6$ \\
$\beta$ of entropy term                     & 5 \\
Number of skills sampling $N_z$             & $50$ \\
Number of layers for $R_{\theta_r}$         & $3$ \\
Hidden layer dim for $R_{\theta_r}$         & $256$ \\
\bottomrule
\end{tabular}
\caption{Hyperparameters of \method.}
\label{table:hyperparameters_sepoa}
\end{table}

\begin{table}[ht]
\centering
\begin{tabular}{ll}
\toprule
\textbf{Environment}~~~~~~~~~~~~~~~~~~~~~~~~~~~~~ & \textbf{Value}~~~~~~~~~~~~~~~~~~~~~~  \\
\midrule
\verb|Cheetah_run|  & $1000/100$ \\ 
\verb|Walker_run|  & $1000/100$ \\ 
\verb|Quadruped_walk|  & $2000/200$ \\ 
\verb|Quadruped_run|   & $2000/200$ \\ 
\verb|Door_open|  & $30000/100$ \\ 
\verb|Button_press| & $30000/100$ \\ 
\verb|Window_open|  & $30000/100$  \\ 
\bottomrule
\end{tabular}
\caption{Feedback amount in each environment. The ``value" column refers to the feedback amount in total / per session.}
\label{table:hyperparameters_condition}
\end{table}

\clearpage

\section{Human Experiments}
\label{app:human_exp}

\paragraph{Preference collection.} 
We collect feedback from human labelers (the authors) familiar with the tasks. 
Specifically, the human labelers watch a video rendering each segment and select the one that better achieves the objective. 
Each trajectory segment is 1.5 seconds long (50 timesteps). 
For Figure \ref{fig:human_get_confused}, the labelers provide labels for 60 queries for each reward difference. 
For Figure \ref{fig:human_like_sepoa}, we run 3 random seeds for each method, with labelers providing 20 preference labels for each run.

\paragraph{Instruction given to human teachers.}

\begin{itemize}
    \item \verb|Cheetah_run|: The goal of the cheetah agent is to run as fast as possible. If the cheetah maintains a higher speed, prioritize that segment. If the speeds are similar, choose the segment where the cheetah has traveled further. Note that the exact speed threshold (\verb|_RUN_SPEED|) is unknown, but faster speeds are preferable.
    \item \verb|Walker_run|: The walker agent should first be standing upright. If both agents are upright or neither is, the tie breaker is the one moving faster to the right. The goal is to run, so prioritize the segment with the higher speed. If the speeds are close, choose the segment that covers more distance.
    \item \verb|Quadruped_walk|: The primary goal is for the agent to stand upright. If both agents are standing or neither is, prioritize the one with the higher speed, but ensure that the agent is moving at least at the walk speed threshold. If both agents are moving at the same speed, choose the segment where the agent’s center is higher off the ground.
    \item \verb|Quadruped_run|: Similar to \verb|Quadruped_walk|, but the agent must meet the running speed threshold. The priority is first to stand upright, then to move faster to the right. If both agents are moving at similar speeds, choose the one with the higher speed (above the running threshold). If they are tied in speed, prioritize the one with the higher stance.
\end{itemize}

\section{Details of skill discovery methods}
\label{app:skill_discovery}

In this section, we introduce the unsupervised skill discovery methods used in this paper in detail.

\subsection{APS}

\begin{wrapfigure}[13]{r}{0.5\textwidth}  
\centering
\vspace*{-0.3in}
\begin{minipage}{1\linewidth}
\centering
\begin{algorithm}[H]
\caption{\textsc{Unsupervised Pretrain} using APS}
\label{alg:aps_pretrain}
\begin{algorithmic}[1]
\FOR {each iteration}
    \STATE Randomly sample skill $z$
    \FOR {each environment step}
        \STATE Obtain $a_t\sim \pi(a|s_t,z)$ and $(s_t,a_t,s_{t+1})$
        \STATE Calculate intrinsic reward $r^\text{int}$ using Eq.~\ref{eq:aps_intrinsic_reward}
        \STATE Store transitions $\mathcal{B}\gets \mathcal{B}\cup \{(s_t,a_t,s_{t+1}, r^\text{int}_t, z)\}$
        \STATE Minimize $\mathcal{L} = -\phi(s_{t+1})^Tz$
        \STATE Minimize $\mathcal{L}_\text{critic}$ and $\mathcal{L}_\text{actor}$ with $r^\text{int}$ in Eq.~\ref{eq: sf q a} 
    \ENDFOR
\ENDFOR
\end{algorithmic}
\end{algorithm}
\end{minipage}
\end{wrapfigure}

APS \cite{liu2021aps} parameterizes latent space $z$ in the form of successor feature \cite{barreto2017successor,hansen2019fast,liu2021aps,laskin2021urlb}.
Specifically, APS assumes the reward function $r$ is the inner product of some state feature $\phi(s)$ and latent variable $z$ indicating the task \cite{barreto2017successor}.
The latent variable $z$ and state feature $\phi(s)$ are normalized to be the unit length \cite{hansen2019fast,liu2021aps}.

In the pretraining stage, APS learns a continuous skill space and a policy $\pi(\cdot|s,z)$ conditioned on the skill by maximizing the intrinsic reward in Eq. \ref{eq:aps_intrinsic_reward}. 
The intrinsic reward includes the inner product between state features $\phi(s)$ and skills $z$, and state entropy $H(\phi(s'))$ \cite{liu2021aps}. 
Besides, APS parameterizes $q(s|z)$ as the Von Mises-Fisher distribution with a scale parameter of $1$ \cite{hansen2019fast}.
The intrinsic reward function is
\begin{equation}
\label{eq:aps_intrinsic_reward}
    r^\text{int}(s,a,s')=\phi(s)^Tz-\log p(s'),
\end{equation}
where $-\log p(s')$ is the entropy term.
APS estimates $-\log p(s')$ using particle-based entropy estimation \cite{singh2003nearest,liu2021behavior}:
\begin{equation}
    -\log p(s')=\log\left(1+\frac1k\sum_{k}\|h-h^k\|\right),
\end{equation}
where $h=\phi(s')$ is the feature of the successor state $s'$, $h^k$ denotes the $k$-th nearest neighbors of $h$ in the replay buffer.
The specific pretraining process using APS is shown in Algorithm \ref{alg:aps_pretrain}.

\paragraph{Successor Feature.}

Successor feature \cite{barreto2017successor} assumes the reward function $r$ is the inner product of some state feature $\phi(s)$ and latent variable $z$ indicating the task:
\begin{equation}
    r(s,a)=\phi(s')^Tz.
    \label{eq: sf r}
\end{equation}
With successor feature, the state-action value function is decomposed as follows:
\begin{align}
Q^{\pi}(s,a)& =\mathbb{E}_{s_0=s, a_0=a} \left[\sum_{i=0}^\infty\gamma^{i}\phi(s_{i+1},a_{i+1},s_{i+1}')\right]^Tz \nonumber \\
&\equiv\Psi^\pi(s,a)^Tz,
\label{eq: sf q}
\end{align}
where $\Psi^\pi(s,a) \equiv \mathbb{E}_{s_0=s, a_0=a} \sum_{i=0}^\infty\gamma^{i} \phi(s_{i+1},a_{i+1},s_{i+1}')$ is the successor feature of $\pi$. 
The critic and the actor could be updated by minimizing the following critic and actor loss:
\begin{align}
\mathcal{L}_{\rm critic}&=\|\Psi(s_t,a_t)^Tz - r_t - \gamma\Psi(s_{t+1},\pi(s_{t+1},z))^T z\|_2 \nonumber \\
\mathcal{L}_{\rm actor}&=-\Psi(s_t,\pi(s_t,z))^Tz
\label{eq: sf q a}
\end{align}
%

\subsection{DIAYN} 
DIAYN \cite{eysenbach2019diversity} considers a discrete skill space, and maximizes
\begin{equation}
    I(S;Z) + \mathcal{H}[A \mid S] - I(A;Z \mid S)=I(S;Z) + \mathcal{H}[A \mid S] - I(A;Z \mid S),
\end{equation}
where $\mathcal{H}$ is entropy, $I$ is the mutual information, and $S$, $A$, $Z$ are random variables for states, actions, and skills respectively.
For the first and the last terms, i.e. $I(S;Z) - I(A;Z \mid S)$, the induced intrinsic reward is
\begin{equation}
    r_z(s, a) = \log q_\phi(z \mid s) - \log p(z),
\end{equation}
where $p(z)$ is the density function of a uniform distribution.
And the policy is updated with SAC to maximize $\mathcal{H}[A \mid S]$.

\subsection{CIC} 
CIC \cite{laskin2022cic} considers a continuous skill space and optimizes the forward form of the mutual information to explicitly optimize the state coverage of the learned skills.
It parameterizes the discriminator with a contrastive density estimator:
\begin{equation}
    \log q(\tau|z) = f(\tau, z) - \log \frac{1}{N} \sum_{j=1}^{N} \exp(f(\tau_j, z)),
\end{equation}
where $f(\tau, z)=g_{\psi_1}(\tau)^\top g_{\psi_2}(z) / (\| g_{\psi_1}(\tau) \| \| g_{\psi_2}(z) \| T)$, $\tau=(s,s')$ is a transition and $T$ is a temperature coefficient.
CIC further uses contrastive learning methods to learn the discriminator, and optimize the entropy term in the forward form of the mutual information using particle-based entropy estimation as \cite{singh2003nearest,liu2021behavior,liu2021aps}.

\end{document}